\newtheorem{definition}{Definition}
\newtheorem{example}{Example}
\newtheorem{proposition}{Proposition}
\newtheorem{lemma}{Lemma}
\title{Tab-Shapley: Identifying Top-$k$ Tabular \\Data Quality Insights}
\author {
    Manisha Padala\textsuperscript{\rm 1},
     Lokesh Nagalapatti\textsuperscript{\rm 2},
     Atharv Tyagi\textsuperscript{\rm 3},
     Ramasuri Narayanam\textsuperscript{\rm 3},
     Shiv Kumar Saini\textsuperscript{\rm 3}
}
\begin{document}

\maketitle

\begin{abstract}
We present an unsupervised method for aggregating anomalies in tabular datasets by identifying the top-$k$ {\em tabular data quality insights}. Each insight consists of a set of anomalous attributes and the corresponding subsets of records that serve as evidence to the user. The process of identifying these insight blocks is challenging due to (i) the absence of labeled anomalies, (ii) the exponential size of the subset search space, and (iii) the complex dependencies among attributes, which obscure the true sources of anomalies. Simple frequency-based methods fail to capture these dependencies, leading to inaccurate results. To address this, we introduce {\em Tab-Shapley}, a  cooperative game theory based framework that uses Shapley values to quantify the contribution of each attribute to the data's anomalous nature. While calculating Shapley values typically requires exponential time, we show that our game admits a closed-form solution, making the computation efficient. We validate the effectiveness of our approach through empirical analysis on real-world tabular datasets with ground-truth anomaly labels.
\end{abstract}

\section{Introduction}


Anomalies in data can significantly hinder the performance of machine learning models \cite{deep-learning-2021, datashapley}. To address this, various techniques have been developed to detect and remove anomalies \cite{chandola2009, datashapley, auto-detect-2018}. Given the absence of labeled anomalies in training datasets, research has increasingly turned to semi-supervised and unsupervised methods for anomaly detection \cite{pangsur2021}. However, many of these approaches focus on automatic anomaly removal without providing human-readable insights into why the flagged data is considered problematic. Gaining such insights is essential for identifying the sources of anomalies, which in turn allows for effective countermeasures to be deployed. Yet, pinpointing the origin of an anomaly often requires deep domain expertise. Without this knowledge, data engineers may find themselves overwhelmed by the complex patterns they must analyze. The challenge, then, is not just in detecting anomalies but in empowering those who manage the data with the understanding they need to take meaningful action.



Motivated by this, our paper focuses on not only detecting anomalies but also offering valuable {\em data quality insights}. We center our efforts on tabular data, which is widely used across numerous enterprise analytics platforms, making our approach highly relevant and impactful for real-world applications.



\noindent\textbf{Data Quality Insights.} Data Insights are prioritized blocks of data, consisting of specific attributes and records, that contain a high concentration of anomalies.  Inspection of such blocks facilitate effective recovery of anomaly sources.  The higher a block appears on the priority list, the more likely it is to be the source of the anomaly, potentially indicating how the attributes and records are compromised.

We illustrate the notion of a data quality insight using an example. Let us consider the example in Table \ref{tab::eg} (top), where the anomalous cells are highlighted in purple. In the second row, we notice an inconsistency: a person with only primary education at the age of 10 is earning $80K$—clearly suspicious and likely due to a data entry error. Across the entire dataset, the attributes {{\em Income}} and {{\em Occupation}} emerge as the most anomalous. To better visualize these anomalies, we rearrange the table, bringing the anomalies to the top-left corner, as shown in Table \ref{tab::eg} (bottom). These concentrated clusters of anomalies are what we refer to as "data insights." In this example, the highlighted rows {{\em Row 4}} and {{\em Row 5}} within the attributes underscore the anomalous patterns. Given the many possible combinations of anomalous attributes, manually investigating all these block structures can be overwhelming. To tackle this, we propose a solution that generates a \textbf{prioritized} list of data insights by identifying the "top-k block structures" in order of decreasing significance.



\begin{table}[!t]
    \centering
    \begin{tabular}{ccccc}
    \toprule
       Age & Education & \cellcolor{blue!25}Income   & \cellcolor{blue!25}Occupation  \\
    \midrule
        30 & high school & 50K   &  military \\
        10 & primary     & \cellcolor{blue!25}80K    & unemployed\\
        45 & graduate    & 120K   &  manager \\
        70 & graduate    & \cellcolor{blue!25}85K    & retired \\
        20 & high school & \cellcolor{blue!25}280K   & \cellcolor{blue!25}advocate \\
        34 & graduate    & 100K   & \cellcolor{blue!25}unemployed \\
        39 & high school   & 100    &  unemployed \\
    \bottomrule
    \end{tabular}
    \quad \quad
    \begin{tabular}{cccc}
    \toprule
    Income  & Occupation &  Age & Education  \\
    \midrule
    \cellcolor{green}280K  & \cellcolor{green}advocate & 20 & high school\\
        \cellcolor{green}85K   & retired & 70 & graduate   \\
         80K  & unemployed & 10 & primary       \\
         100K   & unemployed & 34 & graduate    \\
         50K    &  military & 30 & high school  \\
         120K  &  manager & 45 & graduate  \\
         100     &  unemployed & 39 & high school \\
    \bottomrule
    \end{tabular}
    \caption{Top table shows dataset with potential anomalous cells colored in purple and not potential anomalous cells uncolored. Bottom table Illustrates  a {\em top-$1$ data insight} (in green color), $<\{$Income, Occupation$\}$, $\{$Row 4, Row 5$\}>$.}
    \label{tab::eg}
\end{table}




More formally, given a tabular data and an integer $k$, we aim to determine {\bf top-$k$ tabular data quality insights (or block structures)} to offer to users, where each {\em data quality insight} comprises of two components: $<$subset of anomalous attributes, subset of records where anomalous behavior can be observed$>$. 

\smallskip
{\noindent\bf Challenges.} The absence of supervision in identifying data quality insights makes this task particularly challenging. Additionally, pinpointing the attributes or records as primary sources of anomalies involves a combinatorial search, which is time-intensive. To overcome these challenges, we model the problem using cooperative game theory and leverage Shapley values to aggregate insights. While the computation of Shapley values typically requires exponential time, we demonstrate that our problem formulation enables an elegant closed-form solution, significantly improving efficiency.


\smallskip

{\noindent\bf Contributions.} Our contributions are as follows: (i) We introduce the novel problem of top-$k$ data quality insights for analyzing anomalies in tabular data, providing a solution that prioritizes insights into potential sources of anomalies. (ii) We propose a cooperative game-theoretic model in Section \ref{labelled_cg_Shapley_value}, defining evidence sets for each attribute and record, and calculating their anomalous scores using Shapley values. (iii) In Section \ref{labelled_cg_Shapley_value}, we present a key analytical result that allows efficient computation of Shapley values through a closed-form expression. (iv) Using these anomalous scores, we reorganize the data into block-like structures and introduce Algorithm \ref{algorithm:top-k-DI-extraction} to efficiently identify the top-$k$ data quality insights. (v) Finally, we demonstrate the effectiveness of our proposed approach through extensive experiments on several real-world datasets.

\section{Related Work}
Most prior art explains feature importance scores or provides reasons for anomalous predictions in images and videos \cite{liznerski2020}. For tabular data, the existing literature on deriving feature importance is limited. Some notable methods (\cite{amarasinghe2018, xu2021,macrobase}) that provide feature importance scores as explanations for anomalies require explicit supervision on anomaly labels which are difficult to acquire in practice. The approaches that come most close to our work are the following. In \cite{pang2021}, the authors consider few-shot learning and learn an end-to-end scoring rule. In \cite{antward21}, the authors identify attributes with high reconstruction errors and provide SHAP-based explanations for each of these attributes, but the focus is on identifying attributes rather than records. In \cite{carletti2019}, the authors provide attribute importance specifically designed for isolation forest-based anomaly detection, while in \cite{nguyen2019}, gradient-based methods for reconstruction loss in a VAE are used to derive attribute importance scores for detecting network intrusions.

The use of game theory in the areas of data engineering is a known art in the literature. Below are a few relevant prior art at this intersection:
(i) \cite{eigen-game-iclr-2021} uses game theory as an engine for large scale data analysis; (ii)  \cite{vldb-j-2011} applies game theory to secure data integration; (iii) \cite{sci-adv-2021} for secure sharing of data; (iv) \cite{acm-db-sys-2020} for modelling dynamic interaction between users and DBMS; and finally (v) \cite{amirata:2019} for data valuation, etc.

\section{Preliminaries}



We first begin with a brief introduction to the relevant concepts from cooperative game theory that we will use in the subsequent sections.  

\smallskip
{\noindent\bf Cooperative Games.} \cite{myerson:1997}: We now formally define the notions of a cooperative game and the Shapley value. Let $N = \{1,2,\ldots,n\}$ be the set of players of a cooperative game. A \textit{characteristic function} $v: 2^N \to\mathbb{R}$ assigns a real number to every coalition $C \subseteq N$ that represents payoff attainable by this coalition. By convention, it is assumed that $v(\emptyset)=0$. Now, the two tuple $(N,v)$ defines the {\em cooperative game} or \textit{characteristic function game}.
We call the characteristic function $v$  {\em super-additive}, if $\forall S,R \subseteq N$ and $S \cap R = \Phi$, i.e., $ v(S \cup R) \geq v(S) + v(R)$.

The consequence of super-additive property is that it ensures the formation of grand coalition, i.e. $v(N)\geq \sum_{i \in N} v({i})$.  

\smallskip
{\noindent\bf Shapley Value.} If the cooperative game is super-additive, the grand coalition (that consists of all the players in the game) forms. Given this, one of the rudimentary questions that cooperative game theory answers is how to distribute the payoff of the grand coalition among the individual players. Towards this end, Shapley \cite{Shapley1971} proposed to evaluate the role of each player in the game by considering its marginal contributions to all coalitions this player could possibly belong to. A certain weighted sum of such marginal contributions constitutes a player's payoff from the coalition game and is called the Shapley value \cite{myerson:1997,straffin:1993}. 
Importantly, Shapley proved that his payoff division scheme is the only one that meets, at the same time, the following four desirable criteria:
\begin{itemize}
    \item[(i)] \emph{efficiency} --- all the payoff of the grand coalition is distributed among players;
    \item[(ii)] \emph{symmetry} --- if two agents play the same role in any coalition they belong to (i.e. they are symmetric) then their payoff should also be symmetric;
    \item[(iii)] \emph{null player} --- agents with no marginal contributions to any coalitions whatsoever should receive no payoff from the grand coalition; and
    \item[(iv)] \emph{additivity} --- values of two uncorrelated games sum up to the value computed for the sum of both games.
\end{itemize}

Formally, let $\pi \in \Pi(N)$ denote a permutation of players in $N$, and let $C_{\pi}(i)$ denote the coalition made of all predecessors of agent $i$ in $\pi$ (if we denote by $\pi(j)$ the location of $j$ in $\pi$, then: $C_{\pi}(i) = \{j \in \pi: \pi(j) < \pi(i)\}$). Then the Shapley value is defined as follows \cite{monderer:1996}:

\begin{equation}\label{originalShapley}
SV_i(v) = \frac {1}{|N|!} \sum_{\pi \in \Pi}[v(C_{\pi}(i) \cup \{i\}) - v(C_{\pi}(i))],
\end{equation}

i.e., the payoff assigned to $a_i$ in a coalitional game is the average marginal contribution of $a_i$ to coalition $C_{\pi}(i)$ over all $\pi \in \Pi$.  It is easy to show that the above  formula can be rewritten as:
\begin{equation}\label{Original_SV2nd_form}
SV_i(v) = \sum_{C \subseteq A \setminus \{i\}}\frac {|C|!(|N|-|C|-1)!} {|N|!} [v(C \cup \{i\}) - v(C)].
\end{equation}

We provide an illustration of the same in the extended version. Given the definitions, we discuss our proposed solution in the next section.

\section{Proposed Solution Approach}

We first set the notation used in our paper. Let $T$ be the tabular data consisting of a set of $n$ records and a set of $m$ attributes denoted as $A=\{a_1,a_2,\ldots, a_m\}$. Each record in $T$ is represented as $X_i=(X_{i1},X_{i2},\ldots, X_{im})$, where $X_{ij}$ is the value of attribute $a_j$ for record $X_i$. Throughout the paper, we use $i$ to index records and $j$ to index attributes. We assume that there exists an error value $e_{ij}$ for each attribute value prediction using unsupervised learning methods, such as auto-encoders. We also define a label $L_{ij}$ for each $X_{ij}$ based on the error value $e_{ij}$. The label is either {\em $N\!A$} (\textbf{N}ot a potential \textbf{A}nomaly) or {\em $P\!A$} (\textbf{P}otential \textbf{A}nomaly).

\subsection{Deriving Labels for Cells of Tabular Data}
\label{subsec:PANA}
The first step of our approach involves labeling individual cells in the data as anomalous or not. To achieve this, we train an auto-encoder on the tabular data and use it to reconstruct missing values. We describe the details below,

\smallskip
\noindent\textbf{Training Auto-Encoder:} 
The auto-encoder is trained using the TABNET \cite{tabnet} framework, which is based on an encoder-decoder architecture. During training, 50\% of the features are randomly masked, and the TABNET predicts only the masked features. 


\smallskip
\noindent\textbf{Cell-level Reconstruction Loss:}
During testing, for each test sample $i$, we mask each attribute $j$ iteratively and use the pre-trained TABNET to predict the masked attribute. The error $e_{ij}$ is calculated as the mean-squared error for continuous attributes and cross-entropy loss for categorical attributes between the predicted value and the actual value of the attribute. To make the loss values comparable across different attributes, we standardize the continuous features, and normalize the categorical features between 0 and 1. 

\smallskip
\noindent\textbf{Thresholding of Records:} 
The record level loss $e_i$ is calculated as the average of cell-level losses $e_{ij}$ for each record $i$. To determine the threshold on these errors for identifying anomalous records, we use clustering. We then assign record-level labels $\hat{y}_i \in {1, 0}$ based on whether $e_i$ is above or below the threshold, respectively. Given record-level predictions, we proceed with attribute-level labels.


\smallskip
\noindent\textbf{Thresholding of Attributes:} 
Here we consider every anomalous record $i$, i.e., $\hat{y}_i = 1$. For each such $i$, we cluster $[e_{ij}]_{j=\{1,\ldots,m\}}$ into two clusters using k-means algorithm. The attributes $j$ belonging to the cluster having higher $e_{ij}$ is labelled anomalous, $L_{ij} = P\!A$; otherwise $L_{ij}=N\!A$. In summary, we obtain the cell level predictions $L_{ij}$, where it takes value $P\!A$ if for an $i$ that is predicted to be anomalous we obtain that the $j$ is anomalous as described above.


It is important to note that the method described above is not the only way to calculate labels $L_{ij}$ for tabular data, approaches besides TABNET can also be used. In the next subsections, we discuss how to compute the aggregated scores at both attribute and record level.


\subsection{Evidence Sets and  Cooperative Game}
\label{evidence-sets-cgs}

Using the cell-level label information, we can define evidence sets for attributes (and records), as discussed below

\begin{definition} 
[Evidence Sets for Attributes:] We define evidence set $E_{a_j}$ for attribute $a_j$ to be the set of all records for which the label of $a_j$ is {\em $N\!A$}. That is,
\begin{equation}
\label{evidence-set-attributes}
    E_{a_j} = \{X_i | L_{ij} = N\!A \ \  \mbox{for any} \ i \in \{1,2,\ldots,n\} \} 
\end{equation}

\end{definition}

\begin{definition} 
[Evidence Sets for Records:] We define evidence set $E_{X_i}$ for record $X_i$ to be the set of all attributes with  {\em $N\!A$} being their respective label. That is, 
\begin{equation}
    E_{X_i} = \{a_j | L_{ij} = N\!A\ \  \mbox{ for any} \ j \in \{1,2,\ldots,m\} \}
\end{equation}

\end{definition}



Our proposed approach for deriving anomalous scores for all the attributes in tabular data $T$ is based on the collection ${E_{a_1}, E_{a_2}, . . . , E_{a_m}}$ of evidence sets corresponding to $m$ attributes. We first compute the \textbf{non-anomalous} score\footnote{In consensus with literature, we compute the non-anomalous scores using Shapley values as pay-offs to the players and then invert them to obtain anomalous scores} for each attribute based on the following criteria:
\begin{itemize}
    \item {\em Criteria 1:} The score of the corresponding attribute should be higher if the size of its evidence set is larger. The size of the evidence set reflects the statistical significance of the respective attribute {\em not being} an anomaly.
    \item  {\em Criteria 2:} The score of the corresponding attribute should be higher if the number of unique records that are part of its evidence set is larger.
\end{itemize}

We now define a cooperative game in order to compute these non-anomalous scores of attributes while capturing the above two criteria. 

\noindent{\bf Cooperative Game based on Attributes:}
Let us define a cooperative game $(A,\mathscr{V}_a)$ based on the attributes of tabular data as follows: (i) The set of players $A$ comprises the attributes; and (ii) $\mathscr{V}_{a}(\bullet): 2^{m} \to \mathbb{R}$ is a characteristic function that assigns a value to each subset of players. For each subset $S \subseteq A$, $\mathscr{V}_{a}(S)$ is defined as the cardinality of the set of all records that are members of at least one evidence set corresponding to the attributes in $S$ given by,
\begin{equation}
    \mathscr{V}_{a}(S) = |\cup_{a_j \in S} E_{a_j}|.
    \label{cg-def-attributes}
\end{equation}


\subsection{Computing Shapley Values}
\label{subsec:csv}
The computation of Shapley values for any given cooperative game is known to be a computationally challenging task as it involves dealing with an exponential number of player subsets. However, the cooperative game $(A,\mathscr{V}_a)$ proposed in our approach has a specific structure that allows us to calculate Shapley values for the players (i.e., attributes) efficiently in polynomial time. This is due to the fact that the game $(A,\mathscr{V}_a)$ satisfies the super-additive property, which can be easily verified. As a result of this property, we can derive a closed-form expression for computing Shapley values using Eqns \ref{originalShapley} and \ref{Original_SV2nd_form}, as we will explain in the following section.

\begin{proposition} 
{\em 
The above defined cooperative game $(A,\mathscr{V}_a)$ is super-additive.
}
\end{proposition}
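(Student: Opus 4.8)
The plan is to prove super-additivity directly from the fact that $\mathscr{V}_a$ is defined as a set cardinality, reducing the claim to a purely set-theoretic statement about record coverage. Fix two disjoint coalitions $S, R \subseteq A$ with $S \cap R = \emptyset$, and abbreviate the record-coverage sets by $U_S = \cup_{a_j \in S} E_{a_j}$ and $U_R = \cup_{a_j \in R} E_{a_j}$. Because the attributes of $S$ and $R$ are disjoint, the index set of the big union splits accordingly, so $\mathscr{V}_a(S) = |U_S|$, $\mathscr{V}_a(R) = |U_R|$, and $\mathscr{V}_a(S \cup R) = |U_S \cup U_R|$. The target inequality $\mathscr{V}_a(S \cup R) \geq \mathscr{V}_a(S) + \mathscr{V}_a(R)$ thus becomes a comparison among these three cardinalities.

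Next I would apply inclusion--exclusion to the coverage of $S \cup R$:
\begin{equation}
\mathscr{V}_a(S \cup R) = |U_S \cup U_R| = |U_S| + |U_R| - |U_S \cap U_R| = \mathscr{V}_a(S) + \mathscr{V}_a(R) - |U_S \cap U_R|.
\end{equation}
This identity isolates the entire content of the proposition in a single quantity: the claimed super-additivity is equivalent to $|U_S \cap U_R| = 0$, i.e., to the assertion that no record simultaneously belongs to an evidence set $E_{a_j}$ with $a_j \in S$ and to an evidence set $E_{a_{j'}}$ with $a_{j'} \in R$. Establishing this disjointness of the $S$-coverage and the $R$-coverage is therefore the crux, and once it is secured the displayed identity gives $\mathscr{V}_a(S \cup R) = \mathscr{V}_a(S) + \mathscr{V}_a(R)$, from which the required inequality follows at once.

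To discharge the disjointness condition I would return to the cell-labeling construction of Section~\ref{subsec:PANA} and track a fixed record $X_i$ through the definition of the evidence sets. Membership $X_i \in U_S \cap U_R$ would force $X_i$ to carry the label $N\!A$ on at least one attribute of $S$ \emph{and} on at least one attribute of $R$; the key step is to argue, from how the per-record labels $L_{ij}$ are produced (the $k$-means split of the error vector $[e_{ij}]_{j}$ into a low-error and a high-error cluster for each anomalous record), that the relevant evidence sets feeding $S$ and those feeding $R$ cannot overlap once the disjoint attribute partition $A = S \cup R \cup (\text{rest})$ is taken into account. I would formalise this by analysing, record by record, the induced partition of attributes into its $N\!A$ and $P\!A$ classes and showing that its interaction with the split of $A$ leaves $U_S \cap U_R$ empty.

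The hard part is precisely this last step. The set identity reduces everything to controlling $|U_S \cap U_R|$, so the whole weight of the argument rests on extracting, from the labeling rule rather than from the bare union structure, a structural reason why the record-coverage of $S$ and the record-coverage of $R$ are disjoint. I would therefore concentrate the effort on the record-level $N\!A/P\!A$ analysis that makes this overlap vanish; with $|U_S \cap U_R| = 0$ in hand, the inequality $\mathscr{V}_a(S \cup R) \geq \mathscr{V}_a(S) + \mathscr{V}_a(R)$ is immediate from the displayed identity, and super-additivity of $(A, \mathscr{V}_a)$ is established.
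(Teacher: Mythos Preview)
Your inclusion--exclusion reduction is correct: with $U_S$ and $U_R$ as you define them, $\mathscr{V}_a(S \cup R) = \mathscr{V}_a(S) + \mathscr{V}_a(R) - |U_S \cap U_R|$, so super-additivity holds for a given disjoint pair $(S,R)$ if and only if $U_S \cap U_R = \emptyset$. The gap is that this disjointness is simply false, and no analysis of the $N\!A/P\!A$ labeling rule will rescue it. A single record can carry the label $N\!A$ on several attributes at once---indeed, in the paper's own worked example, record $R1$ lies in $E_{C_1}$, $E_{C_3}$, $E_{C_4}$ and $E_{C_5}$. Taking $S=\{C_1\}$ and $R=\{C_3\}$ (disjoint attribute coalitions) gives $R1 \in U_S \cap U_R$; numerically, $\mathscr{V}_a(\{C_1\})=4$, $\mathscr{V}_a(\{C_3\})=5$, but $\mathscr{V}_a(\{C_1,C_3\})=6<9$. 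Your proposed ``record-level $N\!A/P\!A$ analysis'' therefore cannot succeed, because the conclusion you are aiming for is contradicted by the construction itself.

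The underlying issue is that the proposition, as stated, does not hold: a characteristic function of the form $S \mapsto \bigl|\bigcup_{j\in S} E_j\bigr|$ is a standard example of a monotone \emph{sub}-additive (indeed submodular) set function, and inclusion--exclusion always yields $\mathscr{V}_a(S \cup R) \leq \mathscr{V}_a(S) + \mathscr{V}_a(R)$, with strict inequality whenever some record has an $N\!A$ label in both $S$ and $R$. The paper gives no proof---it only asserts the claim ``can be easily verified''---so there is nothing to compare your argument against; rather, your reduction cleanly exposes why the statement fails. Note that the closed-form Shapley expression in Lemma~\ref{labelled_cg_Shapley_value} does not actually rely on super-additivity (it follows directly from the union-of-evidence-sets structure), so the proposition is not load-bearing for the rest of the paper.
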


The following lemma formally proves that the closed form Shapley values of attributes can be computed efficiently.

\begin{lemma} {\em 
In the cooperative game $(A,\mathscr{V}_a)$, the Shapley value $\phi_{a}(a_j)$ of each attribute $a_j \in A$ can be computed as follows:
\begin{equation*}
    \phi_{a}(a_j) = \sum_{X_i \in E_{a_j}} \frac{1}{|\{ k: X_i \in E_{a_k} \}|}
\end{equation*}
\label{labelled_cg_Shapley_value}}
\end{lemma}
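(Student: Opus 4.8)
The plan is to exploit the additive structure of $\mathscr{V}_a$ over records, and then read off the Shapley value of each resulting ``single‑record'' game. For every record $X_i$, define a simple game $(A, w_i)$ by
\[
w_i(S) = \mathbf{1}\!\left[\, X_i \in \bigcup_{a_j \in S} E_{a_j} \,\right], \qquad T_i = \{\, a_k : X_i \in E_{a_k} \,\},
\]
so that $w_i(S) = 1$ exactly when $S \cap T_i \neq \emptyset$ and $w_i(\emptyset)=0$. Since in~\eqref{cg-def-attributes} each record is counted in $\mathscr{V}_a(S)$ at most once --- precisely when it belongs to at least one evidence set indexed by $S$ --- we have $\mathscr{V}_a(S) = \sum_{i=1}^{n} w_i(S)$ for every $S \subseteq A$, with both sides vanishing at $S=\emptyset$. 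By the additivity axiom of the Shapley value, $\phi_a(a_j) = \sum_{i=1}^{n} SV_j(w_i)$, so it remains only to evaluate $SV_j(w_i)$.

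Next I would compute $SV_j(w_i)$ from the permutation form~\eqref{originalShapley}. Fix $\pi \in \Pi$; the marginal contribution $w_i(C_\pi(a_j) \cup \{a_j\}) - w_i(C_\pi(a_j))$ equals $1$ if and only if $a_j \in T_i$ and no predecessor of $a_j$ in $\pi$ lies in $T_i$ --- that is, exactly when $a_j$ is the first member of $T_i$ to appear in $\pi$ --- and equals $0$ otherwise. Hence if $a_j \notin T_i$ then $SV_j(w_i) = 0$, while if $a_j \in T_i$ then over uniformly random orderings $a_j$ is the first of the $|T_i|$ elements of $T_i$ with probability $1/|T_i|$, so $SV_j(w_i) = 1/|T_i|$. (Equivalently, one can skip the counting: in $w_i$ the players of $T_i$ are mutually symmetric, every player outside $T_i$ is a null player, and $w_i(A)=1$ whenever $T_i \neq \emptyset$, so efficiency, symmetry and the null‑player axiom pin $SV_j(w_i)$ down to $1/|T_i|$ on $T_i$ and $0$ elsewhere.)

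Finally I would substitute back, using that the condition $a_j \in T_i$ is the same as $X_i \in E_{a_j}$ and that $|T_i| = |\{k : X_i \in E_{a_k}\}|$:
\[
\phi_a(a_j) = \sum_{i\,:\,X_i \in E_{a_j}} \frac{1}{|\{k : X_i \in E_{a_k}\}|} = \sum_{X_i \in E_{a_j}} \frac{1}{|\{k : X_i \in E_{a_k}\}|},
\]
which is the claimed formula; degenerate records with $T_i = \emptyset$ contribute $0$ and do not appear in the sum anyway. The step I expect to require the most care is the first one: checking that the union in~\eqref{cg-def-attributes} genuinely makes $\mathscr{V}_a$ an \emph{exact} sum of the $w_i$ with no double counting, and that the additivity axiom legitimately applies to a finite sum of games on the common player set $A$. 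Everything after that is the elementary ``first‑to‑appear'' probability, or equivalently a one‑line appeal to the Shapley axioms; note that super‑additivity of $(A,\mathscr{V}_a)$ is not actually needed for this argument.
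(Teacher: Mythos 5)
Your proof is correct, and it reorganizes — rather than replaces — the paper's argument. The paper works directly with the permutation form of the Shapley value for $\mathscr{V}_a$: it expands the marginal contribution of $a_j$ as the number of records of $E_{a_j}$ not already covered by the predecessors' evidence sets, interchanges the sums over permutations $R$ and over $X_i \in E_{a_j}$, and reads off that the fraction of orderings in which $X_i$ is still uncovered when $a_j$ arrives is $1/|\{k : X_i \in E_{a_k}\}|$, i.e.\ the probability that $a_j$ is the first attribute covering $X_i$. Your route makes that interchange structural: you write $\mathscr{V}_a = \sum_i w_i$ with $w_i(S) = \mathbf{1}[S \cap T_i \neq \emptyset]$, invoke linearity of the Shapley value in the game, and then evaluate each single-record game either by the same first-to-appear probability or, more cleanly, by symmetry, null player and efficiency alone. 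Unfolding your additivity step recovers the paper's double sum exactly, so the combinatorial core is identical; what your version buys is (i) isolating the only step that needs care — that the union in Equation (\ref{cg-def-attributes}) makes $\mathscr{V}_a$ an \emph{exact} sum of the $w_i$ with no double counting — and (ii) an axiomatic finish that makes the $1/|T_i|$ split transparent without any permutation counting, while also sidestepping the paper's notational slip of writing $|\mathscr{V}_a(P_R^{a_j}) \cup \{X_i\}|$ where the union of the predecessors' evidence sets is meant. Your closing observation is also right: super-additivity is not used anywhere in this derivation, in the paper's version or in yours.
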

\begin{proof}
Recall that, the Shapley value of each attribute $a_j$ using the permutation based definition is as follows:
\begin{equation*}
    \phi(a_j) = \frac{1}{m!} \sum_{R} \big[\mathscr{V}_a(P_R^{a_j} \cup \{a_j\}) - \mathscr{V}_a(P_R^{a_j})\big]
    \label{lemma2-proof-eqn1}
\end{equation*}
where the sum ranges over the set $R$ of all $m!$ orders over the players (i.e. attributes) and $P_R^{a_j}$ is the set of players in $A$ which precede $a_j$ in the order $R$. Now, it follows from Equations (\ref{evidence-set-attributes}) and (\ref{cg-def-attributes}) that:
\begin{equation*}
\begin{split}
     \phi_{a}(a_j) &= \frac{1}{m!} \sum_{R} \sum_{X_i \in E_{a_j}} \Big[ \left|\mathscr{V}_a(P_R^{a_j}) \cup \{X_i\} \right| - \left|\mathscr{V}_a(P_R^{a_j})\right| \Big] \\
     &= \frac{1}{m!} \sum_R \sum_{X_i \in E_{a_j}} \mathcal{I}_{X_i \not\in P_R^{a_j}} = \frac{1}{m!} \sum_{X_i \in E_{a_j}} \sum_R \mathcal{I}_{X_i \not\in P_R^{a_j}} \\ 
     &= \sum_{X_i \in E_{a_j}} \frac{\sum_R \mathcal{I}_{X_i \not\in P_R^{a_j}}}{m!} = \sum_{X_i \in E_{a_j}} \frac{1}{|\{ k: X_i \in E_{a_k} \}|}
\end{split}
\end{equation*}
where $\mathcal{I}$ is the indicator function.
\end{proof}
The key take away from this lemma is that Shapley value of each attribute $a_j \in \{a_1,a_2, \ldots, a_m\}$ is an independent sum of {\em contributions} from its records wherein the contribution of each record is inversely proportional to the number of non-anomalous attributes it has. That is, {\em the higher is the Shapley value of an attribute, the more probable is that attribute being non-anomalous}.


\begin{example}
    We provide a demonstration of how to calculate Shapley values for attributes using a stylized tabular data with 5 attributes ($\{C_1,C_2,C_3,C_4,C_5\}$) and 6 records ($\{R1,R2,R3,R4,R5,R6\}$). 
    The evidence sets for the attributes are determined based on the placement of "$N\!A$" labels in the table. Specifically, $E_{C_1}=\{R1,R2,R4,R5\}$, $E_{C_2}=\{R2,R3,R6\}$, $E_{C_3}=\{R1,R3,R4,R5,R6\}$, $E_{C_4}=\{R1,R2,R5,R6\}$, and $E_{C_5}=\{R1,R3,R5\}$.

    To compute the Shapley value for attribute $C_4$ 
    we use the following formula:
    $ \phi_a(C_4) = \frac{1}{4} + \frac{1}{2} + \frac{1}{4} + \frac{1}{3} = 1.33,$ where $\frac{1}{4}$ corresponds to the appearance of $R1$ in four evidence sets ($E_{C_1}$, $E_{C_3}$, $E_{C_4}$, $E_{C_5}$), $\frac{1}{2}$ corresponds to the appearance of $R2$ in two evidence sets ($E_{C_1}$,  etc. Using a similar approach, we calculate the Shapley values for the other attributes: $\phi_a(C_1) = 1.33$, $\phi_a(C_2) = 1$, $\phi_a(C_3) = 1.66$, and $\phi_a(C_5) = 0.83$. Lower Shapley value scores are indicative of anomalous behavior. Based on the computed scores, we sort the attributes into four buckets in descending order of anomaly likelihood: ${C_5},{C_2},{C_1,C_4},{C_3}$.
\end{example}

The proposed framework is summarized in Algorithm \ref{algorithm:tab-shapley}, which we refer to as the {\em Tab-Shapley} algorithm. In this algorithm, Lines 4-5 calculate the evidence sets for each attribute using Definition 1. Then, Lines 7-11 efficiently compute the Shapley value for each attribute $a_j$ using the closed form expression described in Lemma \ref{labelled_cg_Shapley_value}.


\begin{algorithm}[t]
    \caption{Tab-Shapley Algorithm}
    \label{algorithm:tab-shapley}
    \KwIn{Tabular dataset $T$}
    \KwOut{Global scores for attribute in $T$ }
    \SetKwProg{Fn}{Function}{}{end}
    \Fn{\textsc{ComputeAttributeScores}($\{a_1, \cdots, a_m\}$)} 
    {
        $k\gets 0$ \;
        //construction of evidence sets \\
        \For{$j \in \{1, \ldots, m\}$}
        {
            $E_{a_j} \leftarrow \{X_i | L_{ij} = N\!A\ \  \mbox{ for any} \  i \in \{1,2,\ldots,n\} \} $ \;
        }
            
       //Computation of anomalous scores for attributes \\
        \For{$j \in \{1, \ldots, m\}$}
        {
            $s_{a_j} \leftarrow 0$ \\
            \For{$X_i$  in  $E_{a_j}$}
            {
                $s_{a_j} \leftarrow s_{a_j}$ + $\frac{1}{|\{ k: X_i \in E_{a_k} \}|}$
            }
        }
        \Return $\{s_{a_1}, s_{a_2}, \ldots, s_{a_m} \}$
    }
\end{algorithm}

\begin{figure*}[!t]
    \centering
    \subfloat[Arrhythmia:Original \label{fig:ar_or}]{{\includegraphics[width=0.20\textwidth]{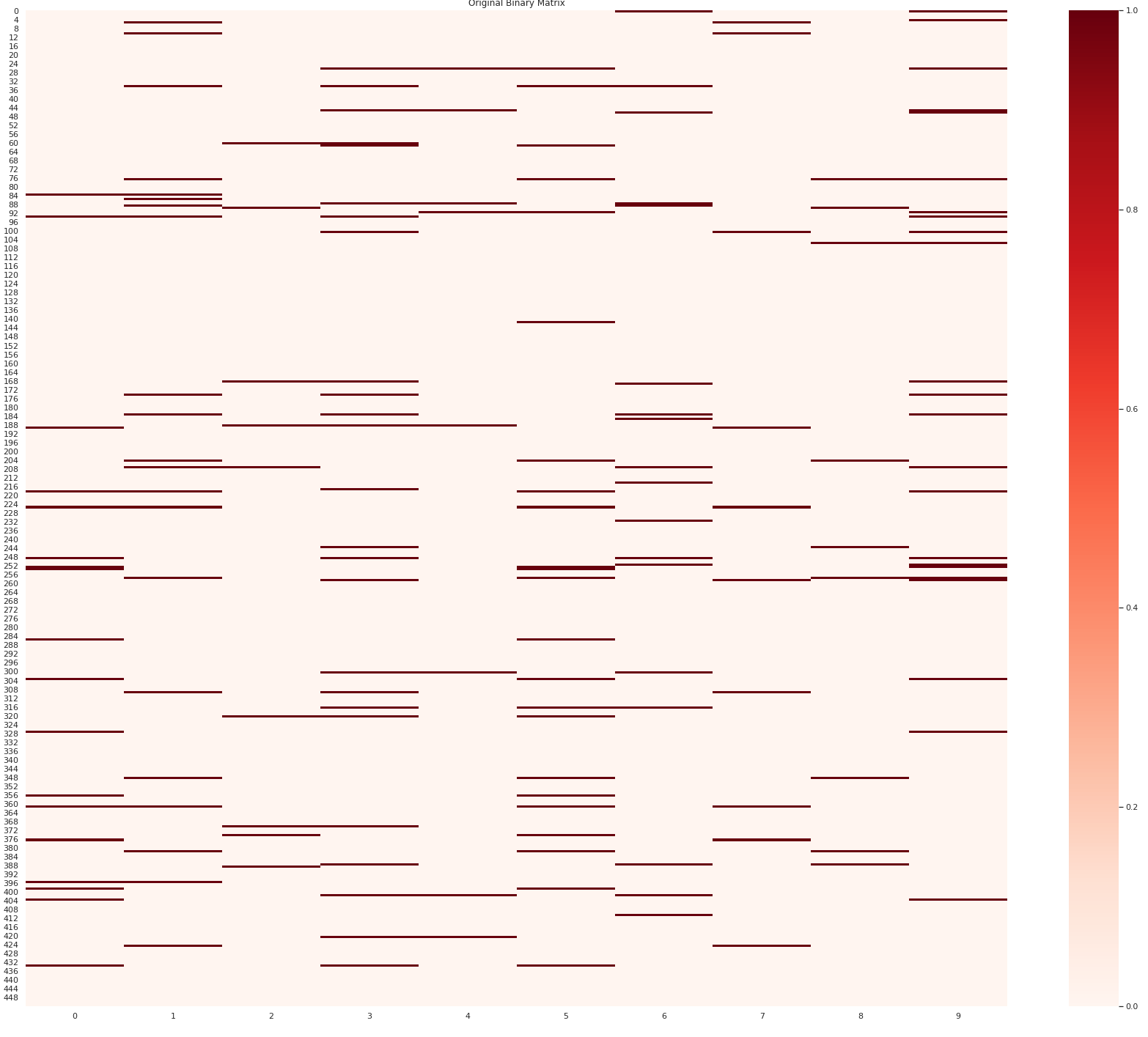} }}
    \subfloat[Arrhythmia:Tab-Shapley \label{fig:ar_ts}]{{\includegraphics[width=0.275\textwidth]{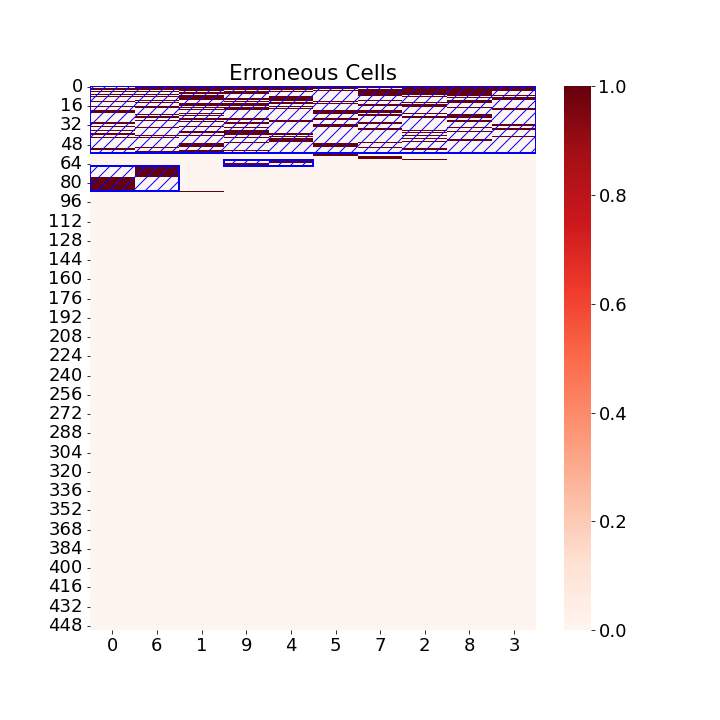} }}%
    \subfloat[Ionosphere:Original \label{fig:io_or}]{{\includegraphics[width=0.20\textwidth]{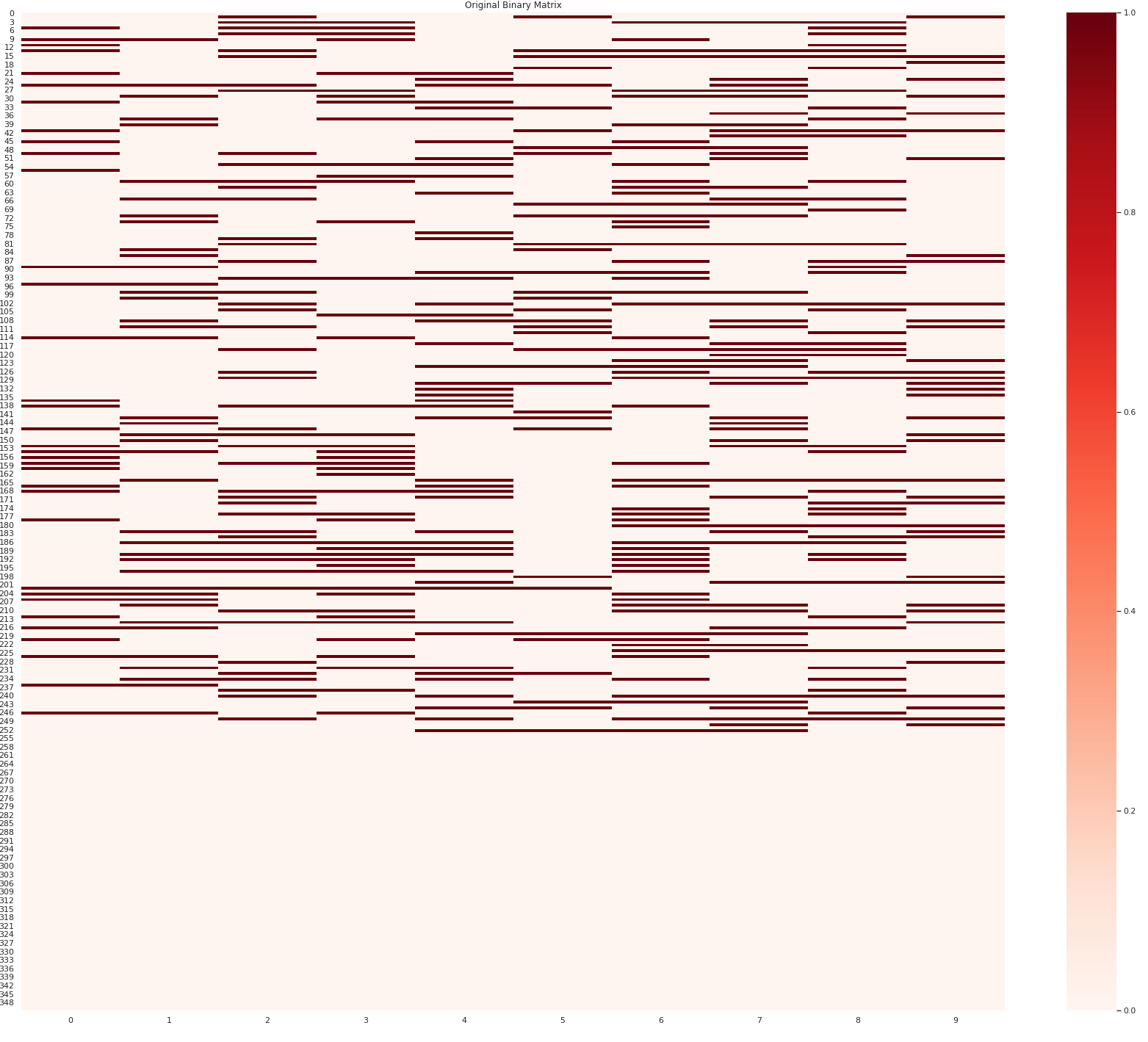} }}%
    \subfloat[Ionosphere:Tab-Shapley \label{fig:io_ts}]{{\includegraphics[width=0.275\textwidth]{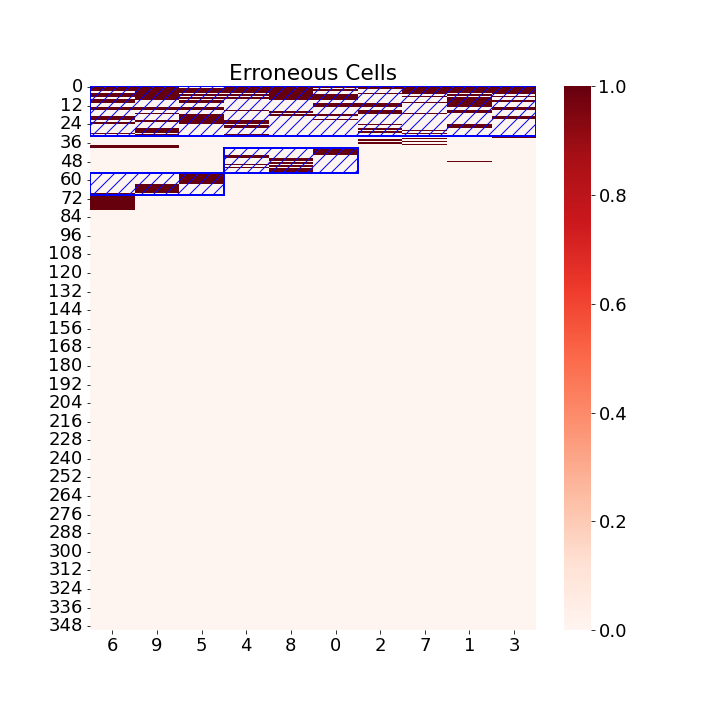} }}%
    \caption{Top-$k$ insights: Darker cells indicate anomaly. The blocks that are filled with blue patterns show the top-$K$ insights for $K=3$. The results are shown for $\alpha=0.2$; higher values of $\alpha$ would create smaller blocks.}%
\end{figure*}

\begin{figure}
    \centering
    \includegraphics[width=0.4\textwidth]{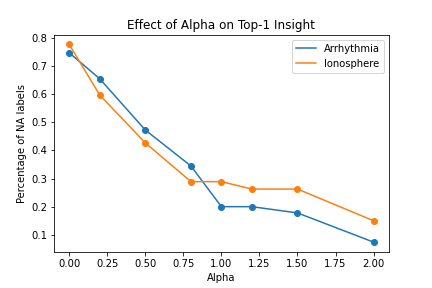}
    \caption{Impact of $\alpha$ on $N\!A$ cells contained in the top-$1$ insight.}
    \label{fig:alpha}
\end{figure}

\noindent{\bf Computational Complexity of Algorithm \ref{algorithm:tab-shapley}:} 
Now, let's analyze the time and space complexities of Algorithm \ref{algorithm:tab-shapley}. The construction of evidence sets $E_{a_j}$ for all $j$ requires $O(mn)$ time since we need to scan all the rows linearly for each attribute. In the worst case scenario, each evidence set can contain all the records, resulting in a space complexity of $O(mn)$.
Next, to compute the Shapley values, we utilize the closed-form expression derived in Lemma \ref{lemma2-proof-eqn1}. In Line 10 of Algorithm \ref{algorithm:tab-shapley}, we need to track the number of anomalous attributes in each record of the table, which requires $O(n)$ space. Once we have this information, the Shapley values can be computed linearly in $O(mn)$ time.
Overall, Algorithm \ref{algorithm:tab-shapley} can be executed with a time complexity of $O(mn)$ and a space complexity of $O(mn)$.

In some applications, the absolute values of reconstruction errors may provide useful information for identifying anomalous attributes.
To cater to them, we propose a weighted variant of the Tab-Shapley method which is directly computed on top of the reconstruction error.

\subsection{Computing Top-$k$ Data Insights}
First, we note that the attributes and records with lower Shapley values are more likely to be anomalous, allowing us to rearrange them accordingly and create block-like structures in the tabular data. This can be observed in Figures \ref{fig:ar_ts} and \ref{fig:io_ts}, where errors are concentrated in the top-left region of the table.

Now, we present Algorithm \ref{algorithm:top-k-DI-extraction} that outlines the details of extracting the top-$k$ data insights based on the block structures in the tabular data $T$. 
The algorithm depends on a scoring matrix $S$, that is of the same size as that of $T$. Each anomalous cell $S[i, j]$ is initialized with $1 - \frac{i \times j}{m \times n}$. The scores for anomalies thus have a decaying effect as we traverse toward the bottom-right of the table. The cells that are non-anomalous are initialized in a similar manner but with the sign flipped. We further scale the non-anomalous scores with a factor $\alpha > 0$, that controls the number of non-anomalous cells that we can afford to have in each insight. Figure \ref{fig:alpha} shows a declining trend for the percentage of non-anomalous cells with an increase of $\alpha$, which is as expected. Once the $S$ matrix is constructed, extracting top-$K$ insights simply reduces to iteratively extracting $K$ disjoint maximum sum subarrays from $S$, which can be efficiently solved using the Kadane's algorithm \cite{maxsum}. 

\begin{algorithm}[t]
    \caption{Extract top-$K$ data insights}
    \label{algorithm:top-k-DI-extraction}
    \KwIn{Reordered table $T$, integer $K$, $\alpha \in \mathbb{R}$, error labels $L$}
    \KwOut{ Top-$k$ data insights from $T$ }
    \SetKwProg{Fn}{Function}{}{end}
    \Fn{\textsc{ExtractInsights}($T, K, \alpha, L$)} 
    {
        let $n \leftarrow$ number of rows, $m \leftarrow$ number of columns in $T$ \;
        let $S_{n \times m} \leftarrow 0_{n\times m}$, insights $\leftarrow \{\}$ \;
        \For{$i \in  \{0, 1, \cdots, n-1\}$}
        {
            \For{$j \in \{0, 1, \cdots, m-1\}$}
            {
                \eIf{$L[i, j] == P\!A$}
                {
                     $S[i, j] \leftarrow 1 - \frac{i \times j}{n\times m}$ \\
                }
                {
                    $S[i, j] \leftarrow \alpha \times \left(\frac{i \times j}{n\times m} - 1 \right)$ \\
                }
                
            }
        }

    \For{$k \in K$}
    {
         insights[$k$] $\leftarrow$ \textsc{KadaneMaxSumSubArr}$(S)$ 
         
         Set $k^{\mbox{th}}$ submatrix in $S$ to $-\infty$ 
    }

    \Return $K$ sub-matrices of max sum in insights
    }
\end{algorithm}

\if 0

\subsection{Weighted Variant of the Cooperative Game} 
The cooperative game defined previously did not consider the reconstruction errors of entries in the tabular data, and rather worked with the binarized labels. In some applications, the absolute values of reconstruction errors may provide useful information for identifying anomalous attributes. To address this, we introduce a weighted variant of the cooperative game, denoted $(A,\mathscr{V}_{wa})$, based on the attributes in the tabular data. In this game, (i) the set of players is the attributes in $A$, like earlier; and (ii) the characteristic function $\mathscr{V}_{wa}(\bullet) : 2^{m} \to \mathbb{R}$ assigns a value to each subset of players in a different manner. Specifically, for any $S \subseteq A$, we define $\mathscr{V}_{wa}(S)$ as the sum of {\em all absolute error entries $e{ij}$} for each record $X_i$ in the evidence set $E_{a_j}$ of attribute $a_j \in S$. Formally,
\begin{equation*}
    \mathscr{V}_{wa}(S) = \sum_{a_j \in S} \sum_{X_{i} \in E_{a_j}} e_{ij}.
    \label{cooperative-game-definition}
\end{equation*}

\begin{lemma} {\em 
In the weighted cooperative game $(A,\mathscr{V}_{wa})$,
the Shapley value $\phi_{wa}(E_{a_j})$ of each attribute $a_j \in \{a_1,a_2,\ldots,a_m\}$ can be computed as follows:
\begin{equation*}
    \phi_{wa}({a_j}) = \sum_{X_i \in E_{a_j}} \frac{e_{ij}}{\sum_{ k: X_i \in E_{a_k}} e_{ik}}.
\end{equation*}}
\end{lemma}
The proof of this lemma is analogous to that of Lemma \ref{labelled_cg_Shapley_value}.

Using the closed-form expression, we can efficiently compute the Shapley values for attributes (and records) in the weighted variant of the cooperative game. This allows us to determine the extent of anomalous behavior exhibited in the dataset by the attributes. Once we have these, the next step is to prioritize them and present the top-$K$ insights to the user.


\subsection{Computing Top-$k$ Data Insights}
First, we note that the attributes and records with lower Shapley values are more likely to be anomalous. Thus, we can rearrange them based on this ranking, which results in block-like structures in the tabular data. These blocks provide an organized view of the data insights, with the ones towards the top and left of the table having a higher number of anomalous labels. These blocks contribute significantly to forming the top-$k$ data insights. Figures \ref{fig:arrhythmia-top-k-dq-insights} and \ref{fig:ionosphere-top-k-dq-insights} illustrate this process. By localizing the errors to the top-left of the table, our algorithm facilitates the end user to inspect only a small fraction of the entire dataset to derive insights on anomalous behaviors.

\begin{algorithm}[!hbtp]
\caption{Extract top-$k$ data insights}
\label{algorithm:top-k-DI-extraction}
\begin{algorithmic} 
\STATE {\bf Input:} Reordered tabular data $T$, integer $k$, large $l \in \mathbb{R}$, and threshold $\delta$  \\
\STATE {\bf Output:} Top-$k$ data insights from $T$ 
\vspace{0.2cm}\\
\STATE $n \leftarrow$ number of rows rows; $m \leftarrow$ number of columns in $T$  \\
\STATE let $S \leftarrow [0]_{n\times m}$ \\
\STATE initialize $S[0][0] \leftarrow L$  
\vspace{0.2cm}\\
\ForEach{$i \in  \{1,2, \ldots, n-1\}$}{
    $S[i][0] \leftarrow S[i-1][0] - \delta$ \quad  ($\delta$ is pre-specified)
}
\ForEach{$i$  $\in$  $\{0, 1, \ldots, n-1\}$}{
    \ForEach{$j$  $\in$  $\{1, 2, \ldots, m-1\}$}{
    $S[i][j] = S[i][j-1] - \delta$ 
    }
}
\ForEach{$i$  $\in$  $\{0, 1, \ldots, n-1\}$}{
    \ForEach{$j$  in  $\{0,1, \ldots, m-1\}$}{
        \uIf{$T[i][j] == 0$}{
            $S[i][j] = - \infty $
        }
    }
} \todo[inline]{Does this mean that we should not have $N\!A$ cells in  any of the top-$k$ insights? How will this algorithm extend to weighted variant?}
\RETURN  $k$ sub-matrices from $S$ with maximum sum as top-$k$ data insights
\end{algorithmic}
\end{algorithm}


Now, we present the algorithm \ref{algorithm:top-k-DI-extraction} that outlines the details of extracting the top-$k$ data insights based on the block structures in the tabular data $T$. First, a scoring matrix $S$ is constructed with the same size as $T$ (Lines 3-4). Each cell in $S$ contains the score, if the cell is labelled anamolous; otherwise it contains $-\infty$. The top-left entry of $S$ is initialized with a large number $L$ (Line 5). As we traverse right and down through $S$, we decrease each cell value by a threshold $\delta$ (Lines 6-10). This is because we want the anomalous propensity of cells in $T$ to decreases as we move towards the right or down, due to the rearrangement of attributes and records in the non-decreasing order of their respective Shapley scores. The score value of non-anomalous cells is explicitly set to $-\infty$ (Lines 11-14). For a given integer value $k$, the problem of extracting top-$k$ data insights from $T$ can be formulated as the problem of obtaining the top-$k$ sub-matrices with the maximum sum from $S$ which is obtained by running the program for finding maximum sum submatrix in a matrix exactly $k$ times. The problem of finding maximum sum submatix is a very standard one and can be solved using the standard dynamic programming approach. The details of the solution approach are omitted for brevity \todo[inline]{This is the main contribution of the paper. These details cannot be omitted in the algorithm.}.
\fi


\section{Experimental Results}
We evaluate the performance of the Tab-Shapley algorithm by comparing it with two baseline approaches: 1) DIFFI and 2) SHAP. Our experiments demonstrate that Tab-Shapley achieves more efficient ranking of attributes and rows compared to the baselines. As a result, the top-k insights derived using Tab-Shapley exhibit a higher concentration of errors, and thus help localize anomalies in the data. Additionally, we qualitatively analyze the Shapley values computed by the algorithm based on the two criteria discussed in Section \ref{evidence-sets-cgs}.\footnote{\footnotesize The anonymized code for the experiments is available at \url{https://drive.google.com/drive/folders/1CKxxBnBgHrY0fLwv7PZBD-ZluH-ld9nz?usp=sharing}}  

We first provide a description of the baseline approaches and then discuss the datasets used for comparison.




\smallskip
\noindent\textbf{Baselines.} We have used two popular approaches that are used to rank the features, (A) Global DIFFI algorithm \cite{carletti2019} uses isolation forest algorithm to derive a global ranking of features in an \textbf{unsupervised} manner (B) SHAP \cite{SHAP} is a \textbf{supervised} algorithm that determines the importance of each feature towards predicting the anomalous nature of records in the dataset.

Note that neither of the above baselines provides a ranking for rows in the dataset. In contrast, Tab-Shapley takes into account the anomalous behavior across {\em both rows and attributes}, allowing it to identify the most anomalous blocks of cells in the dataset. To ensure a fair comparison between the baselines and Tab-Shapley, we introduce a frequency-based row ordering for the baselines.

In the \emph{frequency-based} approach, we compute the total number of anomalous cells (i.e., "PA" cells) in each row and assign a lower rank to a row with a higher number of anomalous cells; in this way, we derive the top-$k$ insights using the baselines. 

\smallskip

\noindent\textbf{Datasets.} In our evaluation, we consider 12 real-world datasets ('Arrhythmia', 'Letter', 'Ionosphere' etc) that provide ground truth labels for both record-level and attribute-level anomalies \cite{xu2021}. These datasets were selected based on the work of Xu et al. \cite{xu2021}, where the authors explain the methodology used to obtain the ground truth labels. Specifically, we use the datasets generated using the probability-based method COPOD.

In addition to the labeled dataset, we also use two popular datasets that do not provide attribute-level ground truth information: (i) \textbf{KDD Cup 1999 Dataset} - We use the $10\%$ version of the data obtained from the UCI Machine Learning Archive, following a similar pre-processing approach as in \cite{antward21}. (ii) \textbf{Forest Cover Dataset.} - We apply a similar pre-processing approach as used by \cite{liu2008}.

\begin{table*}
    \centering
    \begin{tabular}{ccccccccccccccc}
    \toprule
    &\rotatebox[origin=c]{90}{KDD} & \rotatebox[origin=c]{90}{Forest} & \rotatebox[origin=c]{90}{OPT} & \rotatebox[origin=c]{90}{Speech} & \rotatebox[origin=c]{90}{Satimages} & \rotatebox[origin=c]{90}{WBC} & \rotatebox[origin=c]{90}{Arrhythmia} & \rotatebox[origin=c]{90}{Letter} & \rotatebox[origin=c]{90}{Ionosphere} & \rotatebox[origin=c]{90}{SPECT} & 
    \rotatebox[origin=c]{90}{Wine (w)} & \rotatebox[origin=c]{90}{Wine (r)} & \rotatebox[origin=c]{90}{Vertebral} & \rotatebox[origin=c]{90}{PIMA}  \\
    \midrule
    Criteria 1 &0.95 & 0.99& 0.92& 0.77& 0.93& 0.88& 0.78& 0.91& 0.96& 0.94& 0.99 & 0.80 & 0.97 & 0.95\\
    Criteria 2 &0.86& 0.99& 0.94& 0.93& 0.93& 0.96& 0.95& 0.94& 0.96& 0.99& 0.99& 0.87 & 0.89 & 0.94\\
    \bottomrule
    \end{tabular}
    \caption{Pearson's correlation coefficient between Shapley values of attributes and each of the two criteria using various datasets}
    \label{tab:shap-pearson-coefficient}
\end{table*}

\subsection{Efficiency of Top-$k$ Data Quality Insights}    
Here we evaluate the performance of the proposed Tab-Shapley framework to detect top-$k$ data quality insights using several real-world tabular datasets.

\noindent\textbf{Qualitative Evaluation.} We present the results for two datasets, Arrhythmia and Ionosphere, in Figures \ref{fig:ar_ts} and \ref{fig:io_ts} respectively. These figures illustrate the effectiveness of Tab-Shapley when applied using Algorithm \ref{algorithm:tab-shapley}, as it successfully concentrates the erroneous cells. Figures \ref{fig:ar_or} and \ref{fig:io_or} show the distribution of error cells before the Tab-Shapley-based aggregation, where the errors are spread throughout the matrix. However, after re-ranking and re-ordering the rows and columns based on the Shapley values, the erroneous rows and columns become concentrated in the top-left region, as depicted in Figures \ref{fig:ar_ts} and \ref{fig:io_ts}. These visualizations highlight the presence of distinct block structures, each representing a data quality insight. Such insights are invaluable for users in understanding the primary sources of anomalous attributes. We provide additional visualizations for the remaining datasets and synthetic datasets in the extended version.

\smallskip

\noindent\textbf{Quantitative Evaluation.} For quantitative evaluation, we compare Tab-Shapley with the baselines, DIFFI and SHAP. Both baselines provide attribute or column-level rankings, and we extend them by ranking the rows using the frequency-based approach to achieve overall aggregation. To assess the performance of Tab-Shapley-based aggregation in comparison, we compute the following metric.

First, we generate matrices that label each cell as PA (Possibly Anomalous) or NA (Not Anomalous) using the TabNet-based approach described in Section \ref{subsec:PANA}. We propose a metric that counts the number of PA cells in $k \times k$ blocks starting from the top-left. We assert that an effective aggregator will concentrate more anomalies in the top $k \times k$ blocks, thereby being more successful in providing the top-$k$ insights. Additionally, it is worth noting that the performance of algorithms tends to converge as the value of $k$ increases, as the coverage by the $k \times k$ block in the tabular data also increases with larger $k$ values.

\begin{figure}[!t]
    \centering
    \includegraphics[width=0.5\textwidth]{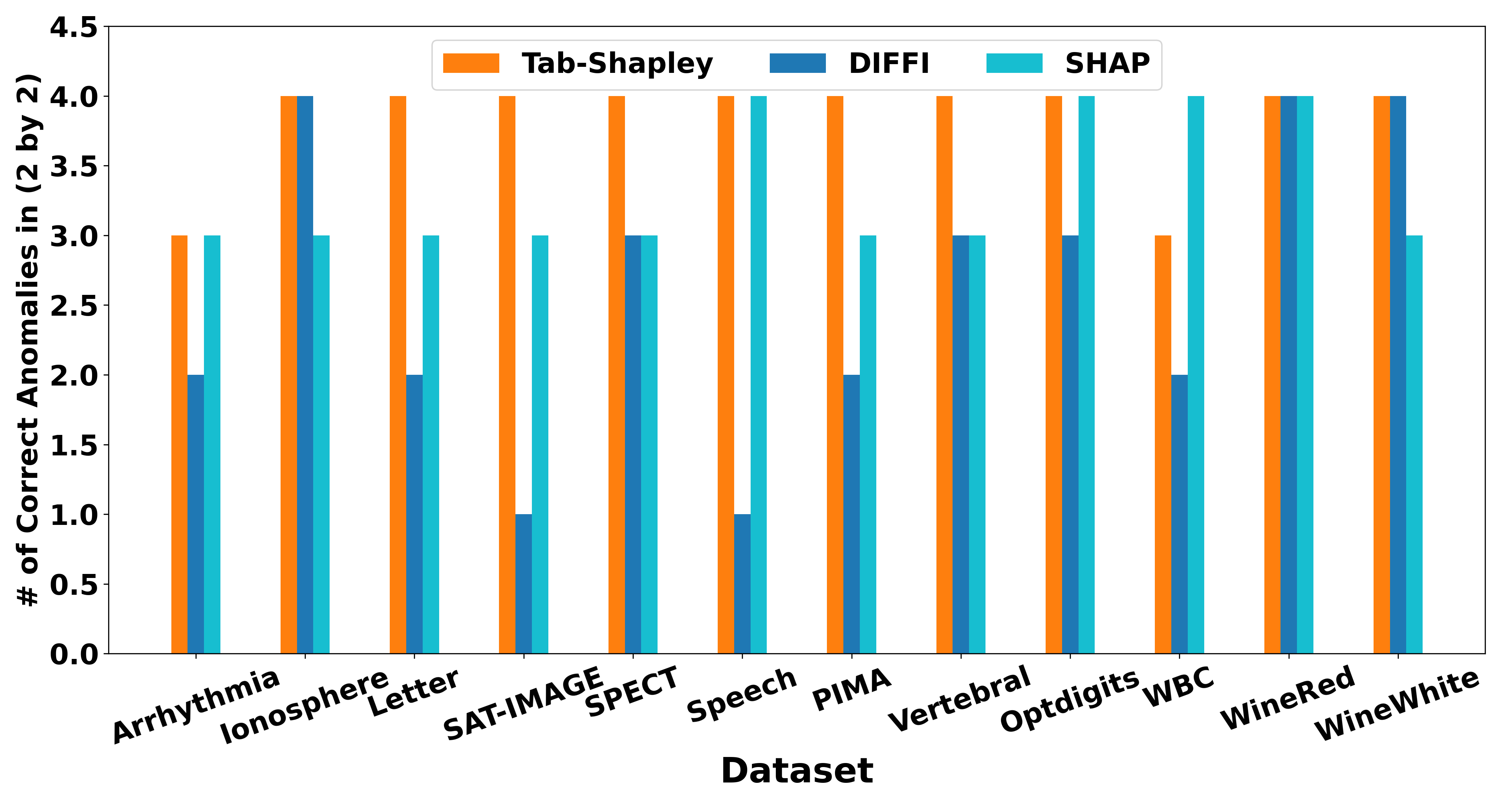}
    \caption{Number of ground-truth anomalies captured in the $2 \times 2$ sub-matrix using Tab-Shapley and DIFFI methods}     \label{fig:top-k-data-insights-1}
\end{figure}


Figure \ref{fig:top-k-data-insights-1} shows the number of ground-truth PA labels captured by the top-left $2 \times 2$ block for each dataset, comparing our proposed Tab-Shapley method with the baseline DIFFI. The bar chart clearly demonstrates the superior performance of the Tab-Shapley framework over the baselines DIFFI and SHAP. Similarly, Figure \ref{fig:top-k-data-insights-2} shows the number of PA labels captured by the respective top-left $4 \times 4$ block, further highlighting the superior performance of Tab-Shapley compared to DIFFI using real-world datasets. Figure \ref{fig:top-k-data-insights-3} extends this comparison to the top-left $6 \times 6$ block.

Note that, in many datasets, especially, in $4\times 4$ and $6 \times 6$ block we observe that SHAP outperforms Tab-Shapley. SHAP is supervised learning based approach, where the importance of each attribute is derived based on the target labels that classify every sample as anomalous or not. Hence, we show that, Tab-Shapley even in an unsupervised setting performs comparable to SHAP.

In summary, we conclude that Tab-Shapley offers valuable insights by effectively concentrating anomalies in them. This concentration of anomalies in turn offers an enhanced ability to understand the sources of anomalies within the dataset. Our proposed framework systematically evaluates the synergy effect on anomalous behavior when considering subsets of attributes.


\begin{table}
    \centering
    \begin{tabular}{cccccc}
    \toprule
    Dataset & Algorithm & 6*6 & 8*8 & 10*10 & 12*12 \\
    \midrule
    KDD Cup 1999&  Tab-Shapley & 17 & 24 & 38 & 46\\
           &  DIFFI & 10 & 16 & 22 & 31 \\
           &  SHAP & 26 & 36 & 54 & 70 \\
    \midrule
    Forest &  Tab-Shapley & 4 & 10 & 12 & 22\\
           &  DIFFI & 7 & 9 & 13 & 14\\
           &  SHAP & 0 & 2 & 3 & 12\\
    \bottomrule
    \end{tabular}
    \caption{Number of anomalies captured in the $6 \times 6$, $8 \times 8$, $10 \times 10$ and $12 \times 12$ submatrices by Tab-Shapley and DIFFI methods using KDD Cup 1999 and Forest Cover datasets.}
    \label{tab:KDD-Forest-Datasets}
\end{table}


We further evaluate the performance of the Tab-Shapley algorithm using two well-known datasets: KDD Cup 1999 and Forest Cover. Table \ref{tab:KDD-Forest-Datasets} presents the results, showcasing the number of anomalies captured within submatrices of size $6 \times 6$, $8 \times 8$, $10 \times 10$, and $12 \times 12$ by both the Tab-Shapley algorithm and the baselines for these datasets. We observe that, Tab-Shapley outperforms both the baselines in Forest Cover Dataset. In KDD Cup dataset, SHAP outperforms Tab-Shapley.

\begin{figure}[!t]
    \centering
    \includegraphics[width=0.45\textwidth]{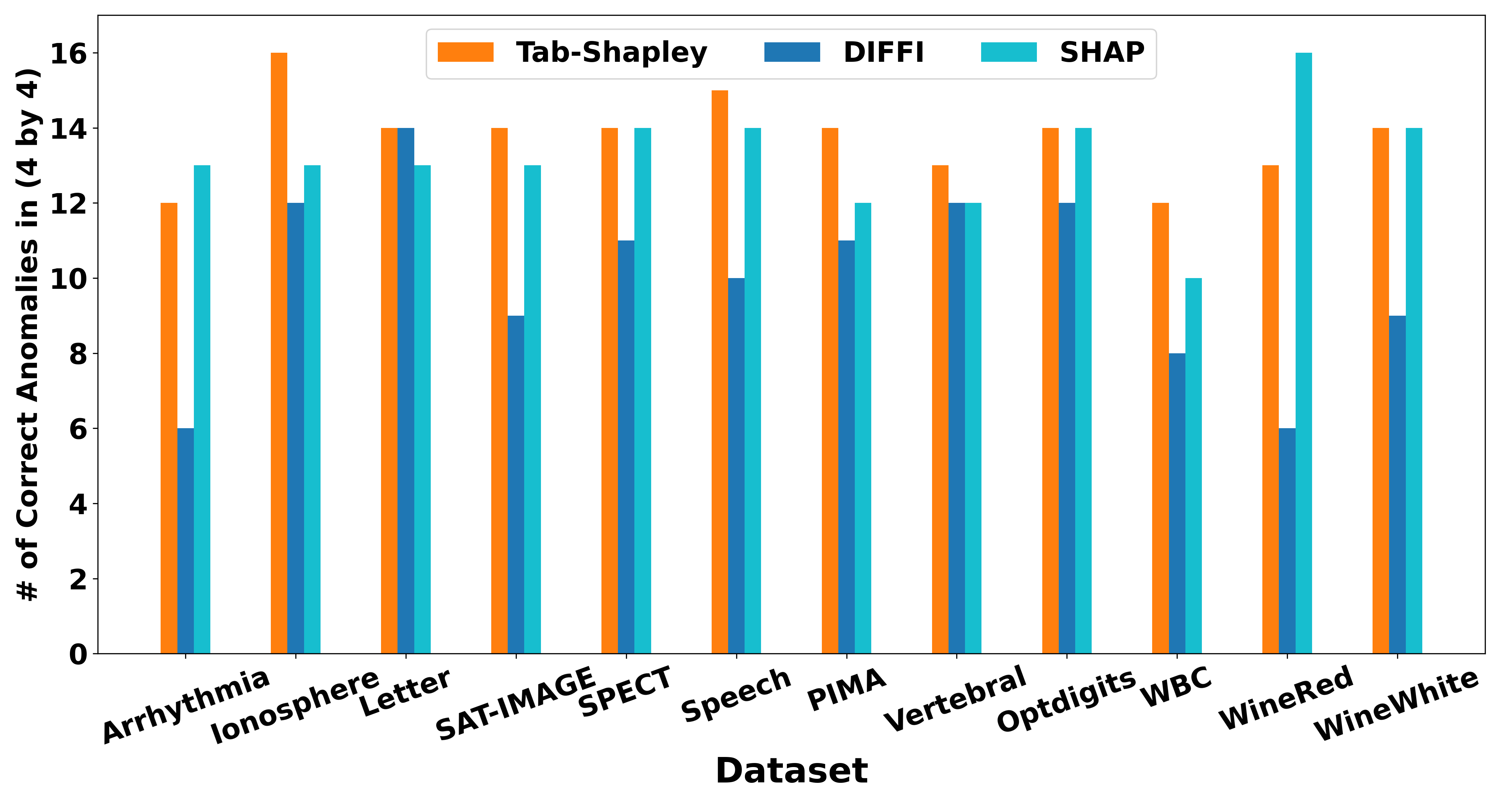}
    \caption{Number of ground-truth anomalies captured in the $4 \times 4$ sub-matrix using Tab-Shapley and DIFFI methods}
    \label{fig:top-k-data-insights-2}
\end{figure}

\begin{figure}[!t]
    \centering
    \includegraphics[width=0.45\textwidth]{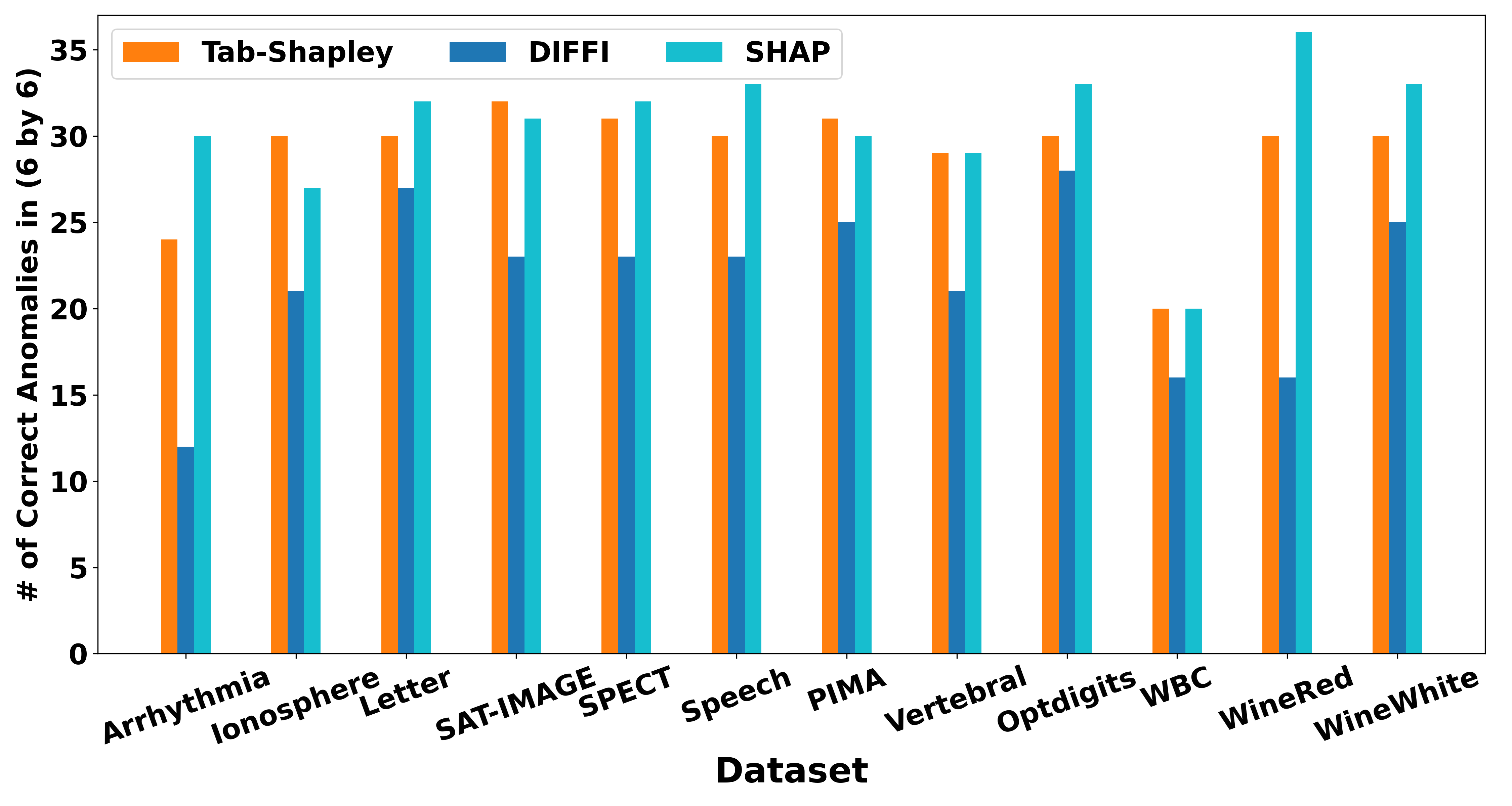}
    \caption{Number of ground-truth anomalies captured in the $6 \times 6$ sub-matrix using Tab-Shapley and DIFFI methods}
    \label{fig:top-k-data-insights-3}
    \vspace{-0.4cm}
\end{figure}


\subsection{Analysis of Shapley Values}


In this experiment, we delve deeper into the attribute scores derived using Shapley values to examine whether they meet the two criteria outlined in Section \ref{evidence-sets-cgs}. Recall that these criteria emphasize that (a) higher attribute scores should correspond to evidence sets with larger sizes and (b) a greater number of unique records. The first row of Table \ref{tab:shap-pearson-coefficient} presents the Pearson's correlation coefficient between the Shapley values of attributes and the sizes of their respective evidence sets (Criteria 1) across various datasets. Similarly, the second row of Table \ref{tab:shap-pearson-coefficient} presents the Pearson's correlation coefficient between the Shapley values of attributes and the number of unique records in their evidence sets (Criteria 2) across the datasets. The table shows a high Pearson's correlation, suggesting that our proposed Tab-Shapley approach aligns strongly with both the criteria.



\section{Conclusion and Future Work}

In this study, we introduced the novel problem of extracting "top-$k$ data quality insights" from tabular data and proposed the Tab-Shapley algorithm as an innovative solution. Our empirical analysis, conducted on both synthetic and real-world datasets, demonstrates the effectiveness of Tab-Shapley, surpassing the unsupervised baseline DIFFI and exhibiting comparable performance to the supervised baseline SHAP. Two potential avenues for future research include: (1) Conducting a human evaluation of Tab-Shapley on a large-scale industrial dataset to assess its real-world effectiveness, and (2) Exploring the integration of human feedback, gathered through annotations on anomalous blocks, to develop an online and adaptive version of the Tab-Shapley algorithm that refines generated insights based on human-in-the-loop feedback.


\bibliography{aaai25}

\begin{thebibliography}{26}
\providecommand{\natexlab}[1]{#1}

\bibitem[{Amarasinghe, Kenney, and Manic(2018)}]{amarasinghe2018}
Amarasinghe, K.; Kenney, K.; and Manic, M. 2018.
\newblock Toward explainable deep neural network based anomaly detection.
\newblock In \emph{HSI}, 311--317. IEEE.

\bibitem[{Antwarg et~al.(2021)Antwarg, Miller, Shapira, and Rokach}]{antward21}
Antwarg, L.; Miller, R.~M.; Shapira, B.; and Rokach, L. 2021.
\newblock Explaining anomalies detected by autoencoders using Shapley Additive Explanations.
\newblock \emph{Expert Systems with Applications}, 186: 115736.

\bibitem[{Ar{\i}k and Pfister(2021)}]{tabnet}
Ar{\i}k, S.~O.; and Pfister, T. 2021.
\newblock Tabnet: Attentive interpretable tabular learning.
\newblock In \emph{AAAI}, volume~35, 6679--6687.

\bibitem[{Bailis et~al.(2017)Bailis, Gan, Madden, Narayanan, Rong, and Suri}]{macrobase}
Bailis, P.; Gan, E.; Madden, S.; Narayanan, D.; Rong, K.; and Suri, S. 2017.
\newblock Macrobase: Prioritizing attention in fast data.
\newblock In \emph{Proceedings of the 2017 ACM International Conference on Management of Data}, 541--556.

\bibitem[{Carletti et~al.(2019)Carletti, Masiero, Beghi, and Susto}]{carletti2019}
Carletti, M.; Masiero, C.; Beghi, A.; and Susto, G.~A. 2019.
\newblock Explainable machine learning in industry 4.0: Evaluating feature importance in anomaly detection to enable root cause analysis.
\newblock In \emph{SMC}, 21--26. IEEE.

\bibitem[{Chandola, Banerjee, and Kumar(2009)}]{chandola2009}
Chandola, V.; Banerjee, A.; and Kumar, V. 2009.
\newblock Anomaly detection: A survey.
\newblock \emph{ACM computing surveys (CSUR)}, 41(3): 1--58.

\bibitem[{Gemp et~al.(2021)Gemp, McWilliams, Vernade, and Graepel}]{eigen-game-iclr-2021}
Gemp, I.; McWilliams, B.; Vernade, C.; and Graepel, T. 2021.
\newblock EigenGame: PCA as a Nash Equilibrium.
\newblock In \emph{ICLR}.

\bibitem[{Ghorbani and Zou(2019{\natexlab{a}})}]{datashapley}
Ghorbani, A.; and Zou, J. 2019{\natexlab{a}}.
\newblock Data shapley: Equitable valuation of data for machine learning.
\newblock In \emph{ICML}, 2242--2251. PMLR.

\bibitem[{Ghorbani and Zou(2019{\natexlab{b}})}]{amirata:2019}
Ghorbani, A.; and Zou, J.~Y. 2019{\natexlab{b}}.
\newblock Data Shapley: Equitable Valuation of Data for Machine Learning.
\newblock In \emph{ICML}, 2242--2251.

\bibitem[{Huang and He(2018)}]{auto-detect-2018}
Huang, Z.; and He, Y. 2018.
\newblock Auto-Detect: Data-Driven Error Detection in Tables.
\newblock In \emph{SIGMOD}, 1377--1392.

\bibitem[{Liu, Ting, and Zhou(2008)}]{liu2008}
Liu, F.~T.; Ting, K.~M.; and Zhou, Z.-H. 2008.
\newblock Isolation forest.
\newblock In \emph{ICDM}, 413--422. IEEE.

\bibitem[{Liznerski et~al.(2020)Liznerski, Ruff, Vandermeulen, Franks, Kloft, and M{\"u}ller}]{liznerski2020}
Liznerski, P.; Ruff, L.; Vandermeulen, R.~A.; Franks, B.~J.; Kloft, M.; and M{\"u}ller, K.-R. 2020.
\newblock Explainable deep one-class classification.
\newblock \emph{arXiv preprint arXiv:2007.01760}.

\bibitem[{Lundberg and Lee(2017)}]{SHAP}
Lundberg, S.~M.; and Lee, S.-I. 2017.
\newblock A Unified Approach to Interpreting Model Predictions.
\newblock In Guyon, I.; Luxburg, U.~V.; Bengio, S.; Wallach, H.; Fergus, R.; Vishwanathan, S.; and Garnett, R., eds., \emph{NeuIPS}, 4765--4774. Curran Associates, Inc.

\bibitem[{McCamish et~al.(2020)McCamish, Ghadakchi, Termehchy, Touri, Sanchez, Huang, and Changpinyo}]{acm-db-sys-2020}
McCamish, B.; Ghadakchi, V.; Termehchy, A.; Touri, B.; Sanchez, E.; Huang, L.; and Changpinyo, S. 2020.
\newblock A Game-theoretic Approach to Data Interaction.
\newblock \emph{ACM Transactions on Database Systems}, 45(1): 1--44.

\bibitem[{Mohammed, Benjamin, and Debbabi(2011)}]{vldb-j-2011}
Mohammed, N.; Benjamin, C.; and Debbabi, M. 2011.
\newblock Anonymity meets game theory: secure data integration with malicious participants.
\newblock \emph{VLDB Journal}, 20(4): 567--588.

\bibitem[{Monderer(1996)}]{monderer:1996}
Monderer, L., Dov;~Shapley. 1996.
\newblock Potential Games.
\newblock \emph{Games and Economic Behavior}, 14: 124--143.

\bibitem[{Myerson(1997)}]{myerson:1997}
Myerson, R.~B. 1997.
\newblock \emph{Game Theory: Analysis of Conflict}.
\newblock Cambridge, Massachusetts, USA: Harvard University Press.

\bibitem[{Nguyen et~al.(2019)Nguyen, Lim, Divakaran, Low, and Chan}]{nguyen2019}
Nguyen, Q.~P.; Lim, K.~W.; Divakaran, D.~M.; Low, K.~H.; and Chan, M.~C. 2019.
\newblock Gee: A gradient-based explainable variational autoencoder for network anomaly detection.
\newblock In \emph{CNS}, 91--99. IEEE.

\bibitem[{Pang et~al.(2021{\natexlab{a}})Pang, Ding, Shen, and Hengel}]{pang2021}
Pang, G.; Ding, C.; Shen, C.; and Hengel, A. v.~d. 2021{\natexlab{a}}.
\newblock Explainable deep few-shot anomaly detection with deviation networks.
\newblock \emph{arXiv preprint arXiv:2108.00462}.

\bibitem[{Pang et~al.(2021{\natexlab{b}})Pang, Shen, Cao, and Hengel}]{deep-learning-2021}
Pang, G.; Shen, C.; Cao, L.; and Hengel, A. v.~d. 2021{\natexlab{b}}.
\newblock Deep Learning for Anomaly Detection: A Review.
\newblock \emph{ACM Computing Surveys}, 54(2):38:1-38:38.

\bibitem[{Pang et~al.(2021{\natexlab{c}})Pang, Shen, Cao, and Hengel}]{pangsur2021}
Pang, G.; Shen, C.; Cao, L.; and Hengel, A. V.~D. 2021{\natexlab{c}}.
\newblock Deep learning for anomaly detection: A review.
\newblock \emph{ACM Computing Surveys (CSUR)}, 54(2): 1--38.

\bibitem[{Shapley(1971)}]{Shapley1971}
Shapley, L. 1971.
\newblock Cores of Convex Games.
\newblock \emph{Int Journal of Game Theory}, 1: 11--26.

\bibitem[{Straffin(1993)}]{straffin:1993}
Straffin, P. 1993.
\newblock \emph{Game Theory and Strategy}.
\newblock Washington, DC, USA: The Mathematical Association of America.

\bibitem[{Tamaki and Tokuyama(1998)}]{maxsum}
Tamaki, H.; and Tokuyama, T. 1998.
\newblock Algorithms for the Maximum Subarray Problem Based on Matrix Multiplication.
\newblock In \emph{SODA'98}, SODA '98, 446–452. USA: Society for Industrial and Applied Mathematics.
\newblock ISBN 0898714109.

\bibitem[{Wan et~al.(2021)Wan, Vorobeychik, Xia, Liu, Wooders, Guo, Yin, Clayton, Kantarcioglu, and Malin}]{sci-adv-2021}
Wan, Z.; Vorobeychik, Y.; Xia, W.; Liu, Y.; Wooders, M.; Guo, J.; Yin, Z.; Clayton, E.; Kantarcioglu, M.; and Malin, B. 2021.
\newblock Using game theory to thwart multistage privacy intrusions when sharing data.
\newblock \emph{Science Advances}, 7.

\bibitem[{Xu et~al.(2021)Xu, Wang, Jian, Huang, Wang, Liu, and Li}]{xu2021}
Xu, H.; Wang, Y.; Jian, S.; Huang, Z.; Wang, Y.; Liu, N.; and Li, F. 2021.
\newblock Beyond outlier detection: Outlier interpretation by attention-guided triplet deviation network.
\newblock In \emph{Proceedings of the Web Conference 2021}, 1328--1339.

\end{thebibliography}

\end{document}